\newcolumntype{L}[1]{>{\hsize=#1\hsize\raggedright\arraybackslash}X}%
\newcolumntype{R}[1]{>{\hsize=#1\hsize\raggedleft\arraybackslash}X}%
\newcolumntype{C}[1]{>{\hsize=#1\hsize\centering\arraybackslash}X}%
\theoremstyle{definition}
\newtheorem{definition}{Definition}[section]
\newtheorem{theorem}{Theorem}
\renewcommand{\hl}[1]{#1}
\newcommand{\cI}{\mathcal{I}}
\newcommand{\cX}{\mathcal{X}}
\newcommand{\cY}{\mathcal{Y}}
\newcommand{\cC}{\mathcal{C}}
\newcommand{\cV}{\mathcal{V}}
\newcommand{\bx}{\mathbf{x}}
\newcommand{\by}{\mathbf{y}}
\newcommand{\col}{f}
\def\mcx{PMC}
\begin{document}

	\title{A Practical Maximum Clique Algorithm for Matching with Pairwise Constraints}

\author[1]{\'{A}lvaro Parra}
\author[1]{Tat-Jun Chin}
\author[1]{Frank Neumann}
\author[2]{Tobias Friedrich}
\author[2]{Maximilian Katzmann}

\affil[1]{School of Computer Science, The University of Adelaide}
\affil[2]{Hasso Plattner Institute, University of Potsdam}


	\maketitle

	\begin{abstract}
		A popular paradigm for 3D point cloud registration is by extracting 3D keypoint correspondences, then estimating the registration function from the correspondences using a robust algorithm. However, many existing 3D keypoint techniques tend to produce large proportions of erroneous correspondences or outliers, which significantly increases the cost of robust estimation. An alternative approach is to directly search for the subset of correspondences that are pairwise consistent, without optimising the registration function. This gives rise to the combinatorial problem of matching with pairwise constraints. In this paper, we propose a very efficient maximum clique algorithm to solve matching with pairwise constraints. Our technique combines tree searching with efficient bounding and pruning based on graph colouring. We demonstrate that, despite the theoretical intractability, many real problem instances can be solved exactly and quickly (seconds to minutes) with our algorithm, which makes our approach an excellent alternative to standard robust techniques for 3D registration\footnote{Code and demo program are available in the supplementary material.}.
	\end{abstract}

	\section{Introduction}

	The registration of discrete 3D point sets or point clouds is a recurrent task in computer vision. Often, the point clouds correspond to objects or surfaces that were acquired from the environment using a 3D scanner. Registering point clouds allows to identify the parts in the point clouds that are ``similar". Thus, point cloud registration plays a major role in various applications, such as object recognition, robotic navigation, and digital reconstruction.


A popular paradigm for registering 3D point clouds is by extracting 3D keypoint correspondences, then estimating the registration function from the correspondences. Specifically, let $\cX = \lbrace \bx_i \rbrace_{i=1}^n$  and $\cY = \lbrace \by_j \rbrace_{j=1}^m$ be two input point clouds. A 3D keypoint technique~\cite{tombari13} is used to generate a tentative correspondence set $\cC = \lbrace c_k \rbrace^{N}_{k = 1}$, where each $c_k := (\bx_k, \by_{k^\prime})$ associates a point $\bx_k \in \cX$ to a point $\by_{k^\prime} \in \cY$. If there are no false correspondences or outliers, the registration function, i.e., a 6 DoF rigid transformation, can be estimated easily from $\cC$~\cite{horn87}. Usually, however, outliers exist in $\cC$, thus the registration function must be estimated using a robust technique~\cite{meer04} such as RANSAC~\cite{fischler81}.

Many current 3D keypoint techniques are in fact much less accurate than their 2D image counterparts~\cite{mikolajczyk04,mikolajczyk05}, since the irregular sampling densities on discrete point sets reduce the efficacy of local features. On real point clouds, it is common to encounter outlier rates in excess of $95\%$. Such high outlier rates greatly increase the computational cost of robust estimation. It is thus vital to investigate alternative approaches for keypoint-based 3D registration.

Instead of estimating the registration function from $\cC$, we can attempt to find the largest subset of $\cC$ that are \emph{pairwise consistent}, i.e., we aim to solve
\begin{align}\label{eq:pw}
	\begin{aligned}
	&\underset{ \cI \subseteq \{1,\dots,N \}}{\text{maximise}}
	& & \left| \cI \right| \\
	&\text{subject to}
	& & d( c_i, c_j) \leq \epsilon, \; \forall i,j \in \cI,
	\end{aligned}
\end{align}
where the ``distance" between two correspondences $c_i = (\bx_i, \by_{i^\prime} )$  and $c_j = (\bx_j, \by_{j^\prime})$ is given by
\begin{align}\label{eq:pcrd}
	d( c_i, c_j) = \left| \left\| \bx_i - \bx_j\right\|_2 - \left\|\by_{i^\prime} - \by_{j^\prime} \right\|_2 \right|.
\end{align}
We say that $c_i$ and $c_j$ are consistent if $d( c_i, c_j)$ is less than a predetermined threshold $\epsilon$; intuitively, this means that $c_i$ and $c_j$ are agreeable (up to $\epsilon$) to the same rigid transformation. Solving~\eqref{eq:pw} then yields the largest subset of $\cC$ whose elements are all pairwise consistent; see Fig.~\ref{fig:pwexample}.

\begin{figure*}[ht]\centering
	\subfloat[]{\includegraphics[height=3.1cm]{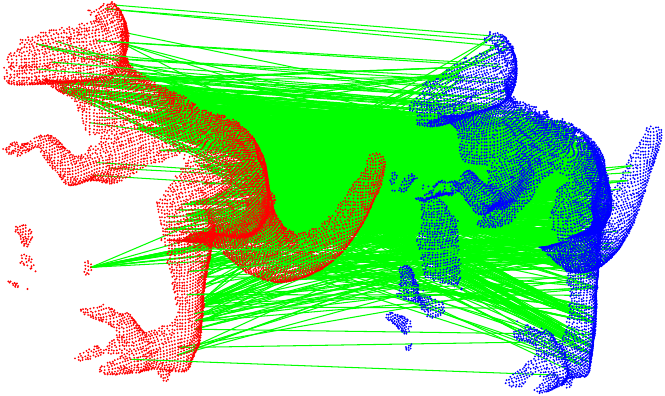}}\hfil
	\subfloat[]{\includegraphics[height=3.1cm]{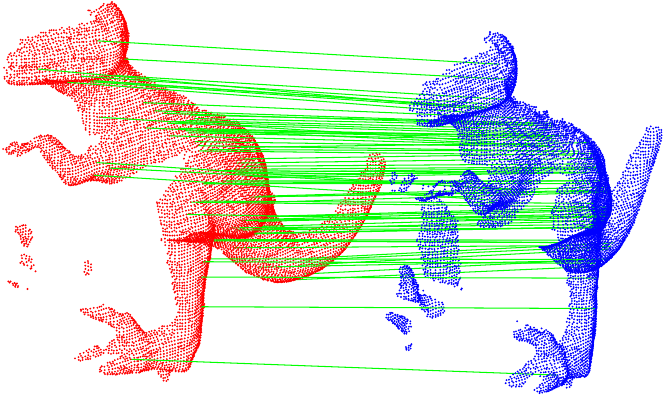}}\hfil
	\subfloat[]{\includegraphics[height=3.1cm]{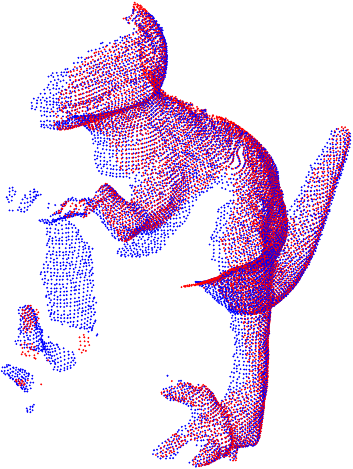}}
	\label{fig:pwexample}
	\caption{Example registration. (a) Input correspondence set $\cC$ of size $N = 2000$. (b) The largest subset of $\cC$ that are pairwise consistent (found in 0.06 seconds by our novel max clique algorithm). (c) The alignment estimated via SVD~\cite{horn87} using the correspondences in (b).}
\end{figure*}

	Problem~\eqref{eq:pw} is a special case of \emph{matching with pairwise constraints}~\cite{bolles_and_cain82,grimson90,tu_et_all_99,enqvist09}\footnote{In the general case, $\cC$ is taken as all $n \times m$ pairings of $\cX$ and $\cY$.}. Observe that~\eqref{eq:pw} does not involve optimising the registration function. Given the solution $\cI^\ast$, the registration function can be estimated from the data indexed by $\cI^\ast$ (using RANSAC~\cite{fischler81} or SVD~\cite{horn87} directly), or the value $|\cI^\ast|$ can directly be taken as the similarity score of shapes $\cX$ and $\cY$. Theoretically, however, problem~\eqref{eq:pw} is intractable (NP-hard) in general; as we will show in Sec.~\ref{sec:graph},~\eqref{eq:pw} is expressible as the classical combinatorial problem of minimum vertex cover. The state-of-the-art algorithm in computer vision based on vertex cover~\cite{enqvist09} can handle only small instances of~\eqref{eq:pw}.

	We note also that, whilst similar, problem~\eqref{eq:pw} is not equivalent to the graph matching or quadratic assignment problem (QAP)~\cite{leordeanu05,zhou13}, since~\eqref{eq:pw} explicitly forbids correspondences that are pairwise inconsistent. Expressing~\eqref{eq:pw} as a QAP would involve a weight matrix with $-\infty$ values, which poses significant difficulties for most QAP solvers.

	\paragraph{Contributions} We show that problem~\eqref{eq:pw} can be solved much more efficiently than previously thought. Specifically, we reformulate~\eqref{eq:pw} as maximum clique, then propose a very efficient algorithm for the problem. Our technique is inspired by a state-of-the-art branch-and-bound (BnB) algorithm~\cite{tomita03}, but with important innovations to the bounding and pruning step based on graph colouring to significantly speed-up the solution - as we will show in Sec.~\ref{sec:results}, our algorithmic improvements are vital to enable the technique on practical-sized input data. Compared to the state-of-the-art method based on vertex cover~\cite{enqvist09}, our method is one order of magnitude faster, and typically requires only seconds to minutes to \emph{globally} solve~\eqref{eq:pw} on realistic input data.

	Our work is the first to illustrate matching with pairwise constraints as a strong alternative to conventional robust fitting procedures for keypoint-based 3D registration. Please see demo program in the supplementary material.

	\section{Graph formulation}\label{sec:graph}

	Let $G= (V, E)$ represent an undirected graph with vertices $V = \{ v_i \}$ and edges $E = \{ (v_i, v_j) \}$.

	\begin{definition}[Adjacency and degree]
		We say that a pair of vertices $v_i$ and $v_j$ of $G$ are adjacent if $(v_i, v_j) \in E$. For each $v_i \in V$, denote the adjacency of $v_i$ as
		\begin{align}
		\Gamma(v_i)=\{ v_j \in V \,|\; (v_i,v_j) \in E\}.
		\end{align}
		Then $|\Gamma(v_i)|$ is called the degree of $v_i$.
	\end{definition}

	\begin{definition}[Consistency graph]
		Given a set of correspondences $\cC$, the consistency graph is constructed as the graph with vertices $V = \cC$ and edges
		\begin{align}\label{eq:e}
		E = \left\{(c_i, c_j) \in \cC \times \cC \mid d( c_i, c_j) \leq \epsilon, \; i \neq j \right\},
		\end{align}
		i.e., two correspondences $c_i$ and $c_j$ are adjacent in the graph if they are pairwise consistent.
	\end{definition}

	\begin{definition}[Inconsistency graph]
		The inconsistency graph is the complement of the consistency graph, i.e., the graph with vertices $V = \cC$ and edges
		\begin{align}
		E = \left\{(c_i, c_j) \in \cC \times \cC \mid d( c_i, c_j) > \epsilon, \; i \neq j \right\}.
		\end{align}
		i.e., two correspondences $c_i$ and $c_j$ are adjacent in the graph if they are pairwise inconsistent.
	\end{definition}



	\begin{definition}[Clique]
		A clique of a graph $G = (V, E)$ is a subgraph of $G$ where every pair of vertices in the subgraph are adjacent. A \emph{maximum clique (MC)} of $G$ is a clique of $G$ with the largest size.
	\end{definition}

	\begin{definition}[Vertex cover]
		A vertex cover of a graph $G = (V, E)$ is a subset of $V$ such that every edge in $E$ is incident with at least one vertex in the subset. The removal of a vertex cover from $G$ leaves an independent set, i.e., a set of vertices with no edges. A \emph{minimum vertex cover (MVC)} of $G$ is a vertex cover of $G$ with the smallest size.
	\end{definition}

	By the above definitions, problem~\eqref{eq:pw} is equivalent to finding the MC of the \emph{consistency} graph constructed from the correspondence set $\cC$. Conversely, the complementary problem to~\eqref{eq:pw} is then finding the MVC of the \emph{inconsistency} graph constructed from $\cC$, i.e., remove the least number of correspondences such that the remaining correspondences are not pairwise inconsistent with each other.

\subsection{MIP solutions}

Both MC and MVC can be expressed as mixed integer programs (MIP). Given input graph $G = (V, E)$, MC can be written as the MIP
	\begin{align}\label{eq:ilpmc}
	\begin{aligned}
	&\text{maximise}
	& & \sum_{i=1}^{|V|} x_i  \\
	&\text{subject to}
	& & x_i + x_j \leq 1, \; \forall (v_i,v_j) \notin E\\
	& & & x_i \in \{0,1\}, \; i = 1 \ldots |V|.
	\end{aligned}
	\end{align}
	The MIP formulation for MVC is
	\begin{align}\label{eq:vc}
	\begin{aligned}
	&\text{minimise}
	& & \sum_{i=1}^{|V|} x_i  \\
	&\text{subject to}
	& & x_i + x_j \geq 1, \; \forall (v_i,v_j) \in E\\
	& & & x_i \in \{0,1\}, \; i = 1 \ldots |V|.
	\end{aligned}
	\end{align}
	Both formulations can then be solved using industry-grade MIP solvers such as IBM CPLEX and Gurobi Optimiser. Since these are mature implementations, we do not further discuss their details; suffice to say that we will compare against the MIP solvers as baselines in the experiments.

	For MVC, Enqvist et al.~\cite{enqvist09} proposed a BnB method where the bounding is conducted using a factor-$2$ approximation technique. Our own experimentation suggests however that their algorithm is generally slower than MIP. In Sec.~\ref{sec:bnbextra}, we propose a novel MC algorithm that is able to significantly outperform the generic MIP solvers.

\section{Maximum clique algorithm}

In this section, we first describe a state-of-the-art MC algorithm called \emph{MCQ}~\cite{tomita03}. As we will show in Sec.~\ref{sec:results}, on realistic input data for point cloud registration, MCQ is still unable to provide good performance. However, MCQ will form the basis for our algorithm to be described in Sec.~\ref{sec:bnbextra}.

\subsection{BnB}\label{sec:simple}


MCQ is a BnB algorithm. Given an input graph $G = (V, E)$, BnB systematically explores the set of cliques of $G$ by building a search tree over the vertices $V$. Before elaborating MCQ, we explain a basic BnB method summarised in Algorithm~\ref{alg:simple}, which represents the basic structure of MCQ.

\begin{algorithm}
\begin{algorithmic}[1]
\REQUIRE A set of candidate vertices $S$.
\STATE \textbf{global variables:} The current clique $R$ and the best clique found so far $R_{best}$.
\STATE \textbf{initialisation:}  $R\leftarrow \emptyset$, $R_{best} \leftarrow \emptyset$.
			\WHILE { $S \neq \emptyset$ }
			\IF {$|R| + |S| \leq|R_{best}|$}
			\RETURN
			\ENDIF
			\STATE $v \leftarrow $ first vertex in $S$.
			\STATE $R \leftarrow R \cup \{v\}$.
			\STATE $S^{\prime} \leftarrow S \cap \Gamma(v)$.
			\IF { $S^{\prime} \neq \emptyset$ }
			\STATE Recursive call with candidate vertices $S^{\prime}$.
			\ELSIF {$|R|>|R_{best}|$}
			\STATE $R_{best} \leftarrow R$.
			\ENDIF
			\STATE $R \leftarrow R \setminus \{v\}$.
			\STATE $S \leftarrow S \setminus \{v\}$. \label{alg:simple:remv}
			\ENDWHILE
			\RETURN $R_{best}$.
	\end{algorithmic}
	\caption{Basic BnB algorithm for MC.}
	\label{alg:simple}
\end{algorithm}

To more intuitively explain Algorithm~\ref{alg:simple}, we use the sample input graph in Fig.~\ref{fig:mcexample}. The first 12 steps of Algorithm~\ref{alg:simple} when applied to the sample graph are shown in Fig.~\ref{fig:tree}.

\begin{figure}[ht]\centering
	\includegraphics[width=.56\linewidth]{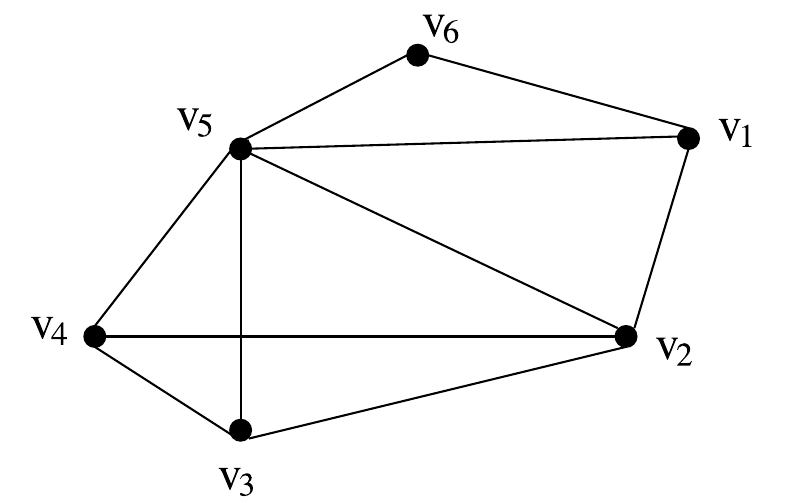}
	\caption{A sample input graph $G = (V, E)$.}
	\label{fig:mcexample}
\end{figure}

\begin{figure*}[ht]\centering
\begin{overpic}[width=\textwidth,grid=false,tics=5]{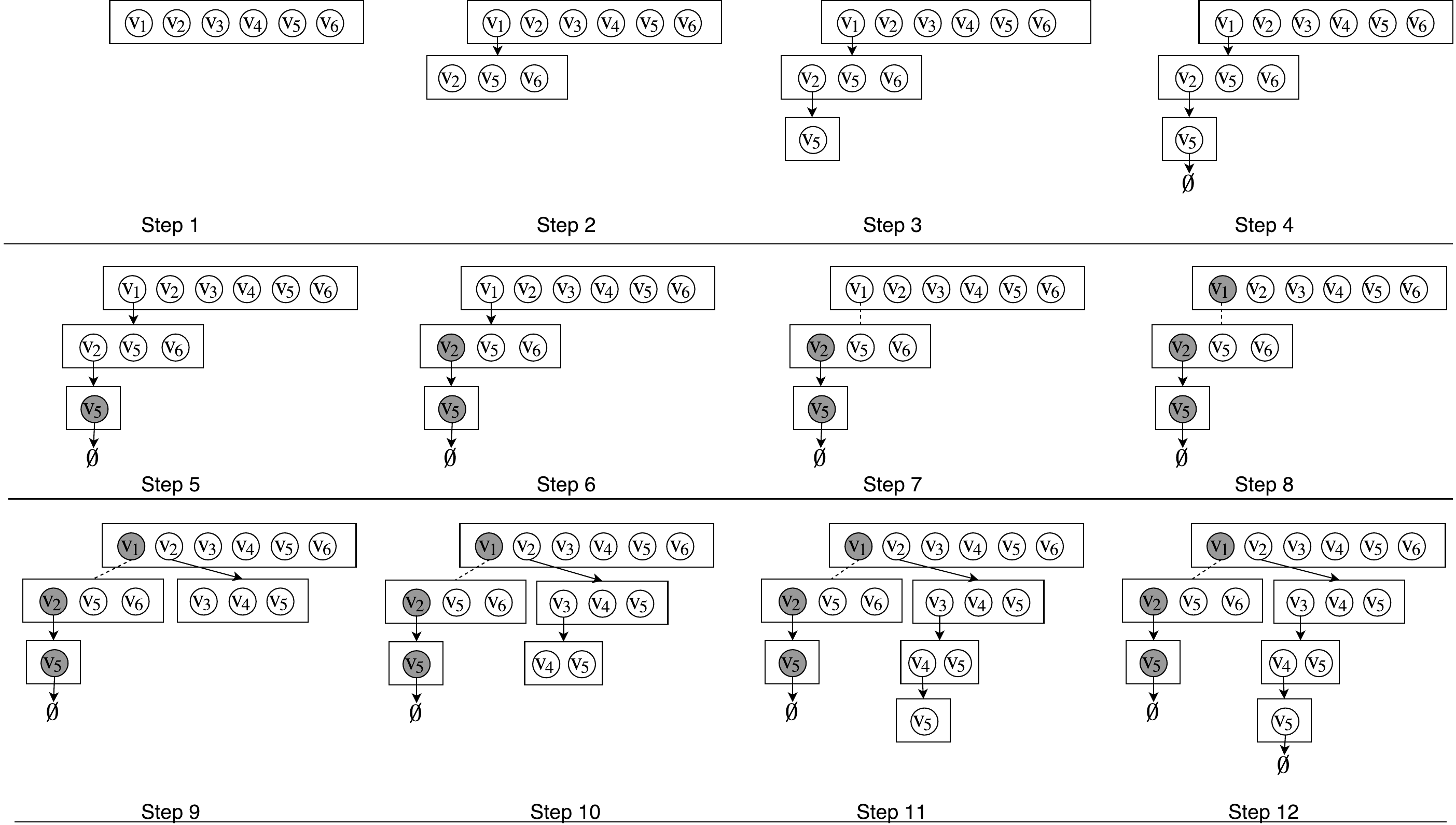}
	\put (0,54.5)  {\small Initial $S$}
	\put (26.5,50.5) {\small $S^{\prime}$}
	\put (51,46.5) {\small $S^{\prime\prime}$}
	\put (77.5,43.5)  {\small $S^{\prime\prime\prime}$}
	\put (84,43) {\small $R_{best} \leftarrow \{v_1, v_2, v_5\}$}
	\put (58.9,29.5) {\small stop expanding since}
	\put (58.9,27.2) {\small $R = \{v_1\}, S^{\prime\prime}=\{v_5, v_6\}$,}
	\put (58.9,25) {\small then, $|R|+|S|\leq |R_{best}|$}
	\put (78,2)  {\small $R_{best} \leftarrow \{v_2, v_3, v_4, v_5\}$}
\end{overpic}
\caption{Progression of the search tree of Algorithm~\ref{alg:simple} when solving MC for the example graph in Fig.~\ref{fig:mcexample}. The first 12 steps are  incrementally generated to show vertex expansions, vertex deletions (grey vertices) and stopping expanding for a node (dashed line). Leaves of the tree correspond to the empty set.}
\label{fig:tree}
\end{figure*}



In the initialisation, the set $S$ containing all vertices in $V$ forms the root node of the BnB search tree; see Step 1 in Fig.~\ref{fig:tree}. The tree is then explored in a depth-first manner, where each branch is initiated by recursively expanding a vertex in $S$. A vertex is expanded by creating a child node $S^\prime$ for $S$ that contains the vertices in $S$ that are adjacent to the vertex to be expanded. Observe in Step 2 in Fig.~\ref{fig:tree} that vertex $v_1$ in the root node $S$ is expanded to yield $S^\prime = \{v_2, v_5, v_6 \}$. The process then continues recursively - e.g., vertex $v_2$ in $S^\prime$ is expanded to yield node $S^{\prime \prime} = \{ v_5 \}$, and $v_5$ in $S^{\prime\prime}$ is expanded to $S^{\prime\prime\prime} = \emptyset$. Note that although $v_2$ is adjacent to other vertices (i.e., $v_3$ and $v_4$) in the graph, $S^{\prime \prime}$ contains $v_5$ only since only $v_5$ exists in the candidate set $S^\prime$ from which $v_2$ was expanded.



The above recursive expansion strategy ensures that a candidate clique $R$ is associated with every node of the search tree ($R=\emptyset$ for the root node) - e.g., $S^\prime$ is associated to $R=\{v_1\}$, $S^{\prime\prime}$ to $R=\{v_1,v_2\}$, and $S^{\prime\prime\prime}$ to $R=\{v_1,v_2,v_5\}$; see Fig.~\ref{fig:tree}.

When an empty node is reached, the largest clique found thus far is recorded as $R_{best}$ - e.g.,  in Step 4 in Fig.~\ref{fig:tree}, the expansion of the single vertex of $S^{\prime\prime}$ creates $S^{\prime\prime\prime}=\emptyset$ and  $R_{best}$ is upgraded to $ R_{best} = \{ v_1, v_2, v_5 \}$.


After a complete branch is expanded for a vertex,  that vertex is removed from the node - e.g., in Step 5 in Fig.~\ref{fig:tree}, $v_5$ is eliminated from $S^{\prime\prime}$; in Step 6, $v_2$ is eliminated from $S^\prime$; and in Step 8, $v_1$ is eliminated from the root node.  This removing step does not compromise optimality since any larger clique containing $R$ and the removed vertex  was already expanded.

A fundamental aspect of BnB algorithms is to prune branches in the search tree that are not promising. To conduct pruning, BnB evaluates an upper bound function of quality achievable at a branch. In Algorithm~\ref{alg:simple}, we simply discard a branch if the depth of the node (equal to $|R|$) plus the number of nodes in the candidate set is not greater than the size of $R_{best}$. In Step 7 in Fig.~\ref{fig:tree}, the algorithm stops expanding vertices in  $S^{\prime\prime}$ since at most a clique of size 3 can be obtained by expanding its vertices (at that stage $|R|=1$, $|S^{\prime\prime}|=2$ and $|R_{best}|=3$).


\subsection{Pruning with graph colouring}

In the ``full version" of MCQ, a more aggressive pruning is conducted using graph colouring. Algorithm~\ref{alg:expand} summarises the procedure.



\begin{algorithm}
\begin{algorithmic}[1]
\REQUIRE A set of candidate vertices $S$, a colouring $\col$.

\STATE \textbf{global variables:} The current clique $R$ and the best clique found so far $R_{best}$.

\STATE \textbf{initialisation:} $R\leftarrow \emptyset$, $R_{best} \leftarrow \emptyset$.

\STATE \textbf{initialisation:} Reorder vertices in $S$ in descending order of degree, i.e., $|\Gamma(v_i)| \geq |\Gamma(v_j)|,\; \forall v_i,v_i\in S $ if $i<j$. \label{alg:mcq:initsort}

\WHILE { $S \neq \emptyset$ }
	\STATE $v \leftarrow $ last vertex in $S$.

	\IF {$|R|+\col(v)\leq|R_{best}|$}
	\RETURN
	\ENDIF

	\STATE $R \leftarrow R \cup \{v\}$.
	\STATE $S^{\prime} \leftarrow S \cap \Gamma(v)$.

	\IF { $S^{\prime} \neq \emptyset$ }
	\STATE Find a colouring $\col^\prime$ of $S^\prime$.  \label{alg:expand:colour}
	\STATE Recursive call with candidate vertices $S^{\prime}$ and colouring $\col^\prime$.


	\ELSIF {$|R|>|R_{best}|$}
	\STATE $R_{best} \leftarrow R$.

	\ENDIF

	\STATE $R \leftarrow R \setminus \{v\}$.

	\STATE $S \leftarrow S \setminus \{v\}$. \label{alg:line:remv}
	\ENDWHILE

	\RETURN $R_{best}$.
\end{algorithmic}
\caption{MCQ algorithm for MC.}
\label{alg:expand}
\end{algorithm}


A colouring of a graph $G=(V,E)$ is a labelling $\col$ of its vertices such that no adjacent vertices have the same colour
\begin{align}
\col(v_i) \neq \col(v_j)\; \text{for all} \; (v_i, v_j) \in E,
\end{align}
and only consecutive numbers starting from 1 are used as colours. The last constraint ensures that the number of colours is equal to the largest colour.

To colour a graph containing a clique $Q$, at least $|Q|$ colours are needed, since each vertex in $Q$ is adjacent to $|Q|-1$ vertices. Hence, a colouring of a subgraph $S$ in the search tree, can be used to obtain an upper bound $u$ of the size of the largest clique of expanding a vertex in $S$.
\begin{align}
	u = |R|+ \max \{ \col(v_i) \; | \; v_i \in S \},
\end{align}
where $R$ is the associated clique to $S$.

To make $u$ a tight upper bound, and hence produce a more aggressive pruning in BnB, we want to find the colouring with the minimum number of colours
\begin{align}
	\min_\col \max_{v_i \in S} \col(v_i). \label{eq:minN}
\end{align}

Finding the minimum colouring~\eqref{eq:minN} is a well known NP-hard problem. MCQ uses a greedy heuristic algorithm for colouring: after an initial sorting of vertices that reduces the average branching factor (Line~\ref{alg:mcq:initsort}), vertices are reordered in increasing colour order such that the rightmost vertex in $S$ has the largest colour. Thus, vertices in $S$ are expanded and removed from right-to-left.


\section{A faster maximum clique algorithm}\label{sec:bnbextra}

\begin{algorithm}
\begin{algorithmic}[1]

\REQUIRE A set of candidate vertices $S$, removed vertices $F$, a colouring $\col$.

\STATE \textbf{global variables:} The current clique $R$ and the best clique found so far $R_{best}$
\STATE \textbf{initialisation:} $R\leftarrow \emptyset$, $R_{best} \leftarrow \emptyset$.

\STATE \textbf{initialisation:} Reorder vertices in $S$ (e.g.,  as in Line~\ref{alg:mcq:initsort} of Algorithm~\ref{alg:expand}). \label{alg:mcx:initsort}


			\STATE Find $C\subseteq S$ as described in Sec.~\ref{sec:bnbextra}.

			\STATE $i \leftarrow |S|$.
			\WHILE { $i>0$ }

			\IF {$|R| + \max_{v_j \in S}\col(v_j)\leq|R_{best}|$} \label{alg:line:pruning}
			\RETURN 
			\ENDIF

			\STATE $v_i \leftarrow $ i-th vertex of $S$.

			\IF {$ v_i \in C$} \label{alg:line:skipping}

			\STATE $R \leftarrow R \cup \{v_i\}$.
			\STATE $S^{\prime} \leftarrow S \cap \Gamma(v_i)$.

			\IF { $S^{\prime} \neq \emptyset$ }

			\STATE $F^\prime \leftarrow F \cap \Gamma(v_i)$.
			\STATE Find a colouring $\col^\prime$ of $S^\prime$.

			\STATE Recursive call with candidate vertices $S^{\prime}$, removed vertices $F^\prime$ and colouring $\col^\prime$.

			\ELSIF {$|R|>|R_{best}|$}
			\STATE $R_{best} \leftarrow R^\prime$.

			\ENDIF
			\STATE $R \leftarrow R \setminus \{v_i\}$.
			\STATE $S \leftarrow S \setminus \{v_i\}$.
			\STATE $F \leftarrow F \cup \{v_i\}$.

\STATE Check colours are consecutive numbers, update if not. \label{alg:mcx:updatecol}

			\ENDIF
			\STATE $i \leftarrow i-1$.
			\ENDWHILE

			\RETURN $R_{best}$.

	\end{algorithmic}
	\caption{PMC algorithm for MC.}
	\label{alg:expand2}
\end{algorithm}

To speed up MCQ, we introduce a novel extra pruning step to avoid exploring multiple branches during the search for the optimal solution; the resultant method is summarised in Algorithm~\ref{alg:expand2}. Our pruning step complements the original pruning technique based on graph colouring. We call the proposed algorithm PMC (Practical Maximum Clique).

\begin{figure*}
	\centering
	\begin{overpic}[width=.95\textwidth,grid=false,tics=5]{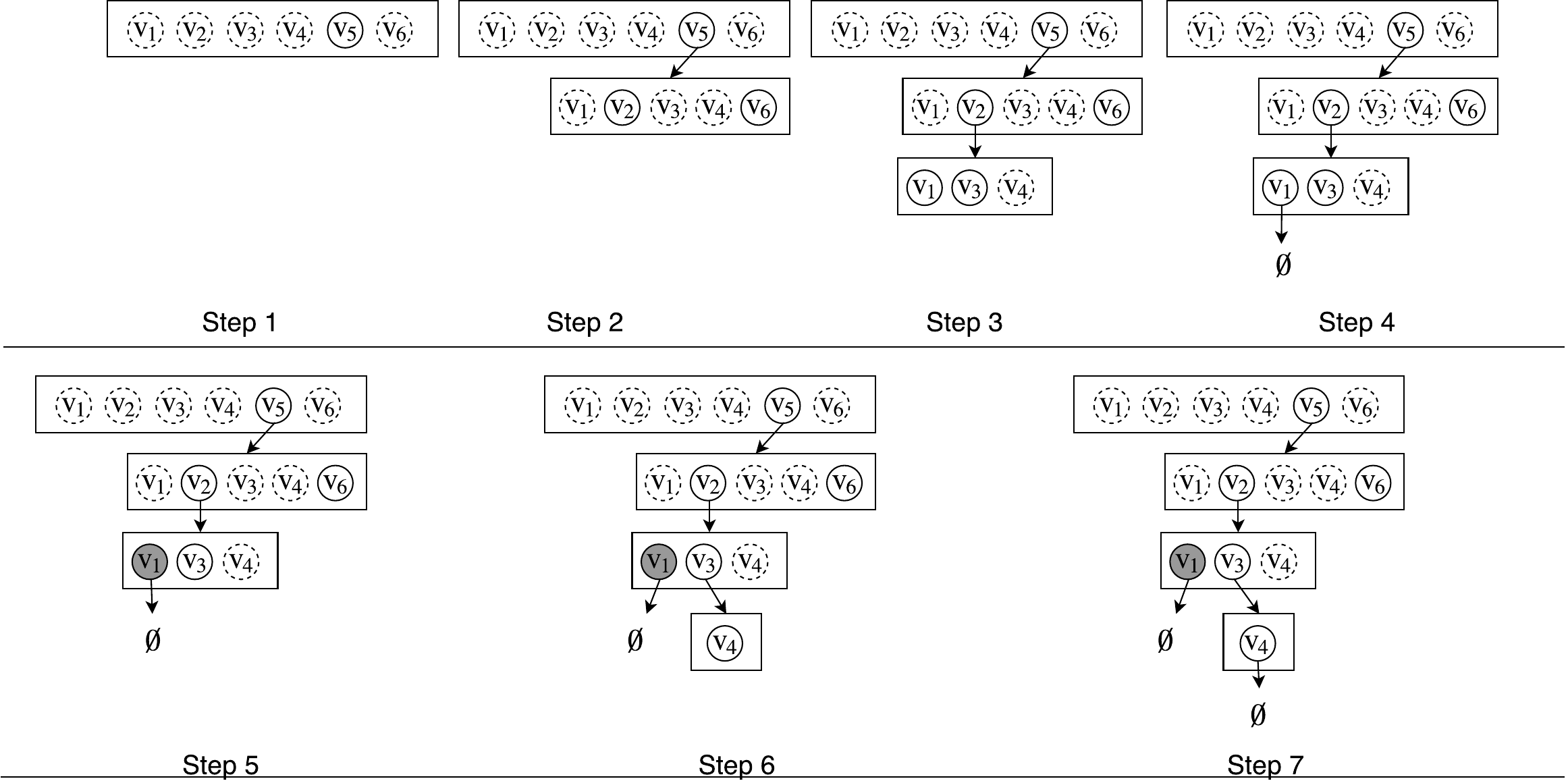}
		\put (0,46.5)  {\small Initial $S$}
		\put (0,43)  {\small Initial $C=\{v_5\}$}
		\put (0,40)  {\small Initial $F=\emptyset$}

		\put (32,41.5)  {\small $S^{\prime}$}
		\put (32,38)  {\small $C^{\prime}=\{v_2, v_6\}$}
		\put (32,35)  {\small $F^{\prime}=\emptyset$}

		\put (54,36)  {\small $S^{\prime\prime}$}
		\put (54,33.5)  {\small $C^{\prime\prime}=\{v_1, v_3\}$}
		\put (54,31)  {\small $F^{\prime\prime}=\emptyset$}
		\put (78,31)  {\small $S^{\prime\prime\prime}$}

		\put (5,5)  {\small $F^{\prime\prime} \leftarrow \{v_1\}$}

		\put (84.2,31) {\small $R_{best} \leftarrow \{v_5, v_2, v_1\}$}

		\put (81.5,3) {\small $R_{best} \leftarrow \{v_5, v_2, v_3, v_4\}$}
	\end{overpic}
	\caption{Progression of the search tree of Algorithm~\ref{alg:expand2} when solving MC for the example graph in Fig.~\ref{fig:mcexample}.  Vertices are visited in same order that in Fig.~\ref{fig:tree}, however only vertices in $C\subseteq S$ are expanded (continue circles). The first 7 steps are  incrementally generated to show vertex expansions and vertex deletions (grey vertices). Leaves of the tree correspond to the empty set.}
	\label{fig:mcxtree}
\end{figure*}

To more intuitively explain the proposed pruning step, we again use the sample input graph in Fig.~\ref{fig:mcexample}. The first 7 steps of Algorithm~\ref{alg:expand2} until the optimal solution is reached are shown in Fig.~\ref{fig:mcxtree} (contrast with the 12 steps needed for Algorithm~\ref{alg:simple}; see Fig.~\ref{fig:tree}). Vertices were expanded in the same order as in Algorithm~\ref{alg:simple} (Line~\ref{alg:mcx:initsort} in Algorithm~\ref{alg:expand2} was set to reorder vertices such that vertices are expanded in the order $v_1, v_2, \ldots, v_6$).

For each node $S$ in the search tree of Algorithm~\ref{alg:expand2}, we aim to find a subset $C \subseteq S $, such that only vertices in $C$ need to be expanded to find the optimal solution. In Fig.~\ref{fig:mcxtree}, $C=\{v_5\}$, $C^{\prime}=\{v_2, v_6\}$ and $C^{\prime\prime}=\{v_1,v_3\}$.

Assume momentarily that $S$ does not change during the progression of Algorithm~\ref{alg:expand2}, i.e., no vertices are removed after expansions. Also assume that for a vertex $v_k$ in $S$, all larger cliques  expanded from $v_k$ have already been explored. If an unexpanded vertex $v_i$ in $S$ is adjacent to $v_k$, any clique containing $R \cup \{v_i\}$ must have already been reached through expanding $v_k$. Thus, there is no need to expand $v_i$. This observation explains the core insight used by the proposed pruning.

Similar to Algorithm~\ref{alg:simple} and Algorithm~\ref{alg:expand}, Algorithm~\ref{alg:expand2} removes a vertex from $S$ after exploring its child nodes. To choose $v_k$ for a node $S$, Algorithm~\ref{alg:expand2} keeps record of removed vertices by adding them to a set $F$ associated to $S$ - e.g., in Step 5 in Fig.~\ref{fig:mcxtree}, $v_1$ is added to $F^{\prime\prime}$ after it is removed from $S^{\prime\prime}$. In the root node of the search tree, $F$ is initiated as $\emptyset$. For a child node $S^\prime$, the corresponding set of removed vertices is initiated as
\begin{align}
F^\prime = S \cap F.
\end{align}

$C$ can be obtained before expanding any vertex in $S$. Note that $v_k$ must be in $C$ as only nodes in $S$ that are adjacent to $v_k$ will not be expanded ($v_k \notin \Gamma(v_k)$). These observations are formalised in the following theorem.
\begin{theorem}\label{theo:additional}
	For  a vertex $v_k$ in $S \cup F$, the largest clique $Q$ containing $R$ must contain at least one vertex in $S\setminus \Gamma(v_k)$.
\end{theorem}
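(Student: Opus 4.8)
The plan is to argue by contradiction: assume the largest clique $Q$ containing $R$ misses $S\setminus\Gamma(v_k)$ entirely, and show that $Q\cup\{v_k\}$ is then a strictly larger clique containing $R$, contradicting the maximality of $Q$. The whole argument rests on one structural invariant of the search tree that I would establish first, namely that the candidate and removed sets together form exactly the common neighbourhood of the current clique, $S\cup F=\bigcap_{r\in R}\Gamma(r)$. This holds at the root (where $R=\emptyset$, $F=\emptyset$ and $S=V$, with the empty intersection read as $V$) and is preserved by the recursion: a child is created with $S'=S\cap\Gamma(v_i)$ and $F'=F\cap\Gamma(v_i)$ while $R'=R\cup\{v_i\}$, so $S'\cup F'=(S\cup F)\cap\Gamma(v_i)=\bigcap_{r\in R'}\Gamma(r)$, and within a node moving a vertex from $S$ to $F$ leaves $S\cup F$ unchanged. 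In particular $v_k\in S\cup F$ is adjacent to every vertex of $R$, and every vertex of any clique that properly contains $R$ must also lie in $S\cup F$.

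With this in hand, suppose $Q\cap(S\setminus\Gamma(v_k))=\emptyset$. First, $v_k\notin Q$: if $v_k\in S$ then $v_k\in S\setminus\Gamma(v_k)$ (because $v_k\notin\Gamma(v_k)$), so $v_k\in Q$ would violate the assumption. Next take any $w\in Q\setminus R$; by the invariant $w\in S\cup F$, and I claim $w\in\Gamma(v_k)$: if $w\in S$ this is forced by the assumption, and the case $w\in F$ is exactly the obstacle addressed below. Granting for the moment that every $w\in Q\setminus R$ is adjacent to $v_k$, the vertex $v_k$ is adjacent to all of $R$ (shown above) and to all of $Q\setminus R$, hence to all of $Q$; since $v_k\notin Q$, the set $Q\cup\{v_k\}$ is a clique containing $R$ of size $|Q|+1$, contradicting that $Q$ is the largest clique containing $R$. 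Therefore $Q\cap(S\setminus\Gamma(v_k))\neq\emptyset$.

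The step I expect to be the main obstacle is the case $w\in F$ above: a removed vertex of $Q$ need not be adjacent to $v_k$, and then $Q\cup\{v_k\}$ fails to be a clique. The resolution uses the defining property of $F$ — that its vertices have already been fully expanded — so that any clique containing a removed vertex has already been recorded in $R_{best}$; it therefore suffices to consider a largest clique $Q$ with $Q\cap F=\emptyset$, for which the displayed argument goes through verbatim (and here $v_k\notin Q$ holds because $v_k\in F$). I would discharge this carefully by induction on the search, using $F'=F\cap\Gamma(v_i)$ together with the right-to-left expansion order to show that every maximal extension of $R$ through a vertex of $F$ was explored before $v_k$ is considered. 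This bookkeeping, rather than the one-line clique-extension argument, is where the real work lies.
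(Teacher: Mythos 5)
Your proposal follows the same core route as the paper: assume $Q$ misses $S\setminus\Gamma(v_k)$ and derive a contradiction by showing $Q\cup\{v_k\}$ is a strictly larger clique containing $R$. The paper's own proof is exactly this one-line extension argument and nothing more. What you add is precisely what the paper leaves implicit or skips: the invariant $S\cup F=\bigcap_{r\in R}\Gamma(r)$ (needed so that $v_k$ is adjacent to all of $R$ and so that $Q\setminus R\subseteq S\cup F$), the check that $v_k\notin Q$, and --- most importantly --- the case of a vertex $w\in Q\cap F$ not adjacent to $v_k$, for which $Q\cup\{v_k\}$ fails to be a clique. The paper does not address that case at all; it tacitly assumes $Q\setminus R\subseteq S$ and waves at it with the phrase ``the largest clique containing $R\cup\{v_k\}$ has been already obtained.'' Your resolution (restrict to largest cliques with $Q\cap F=\emptyset$, justified by an induction showing every extension of $R$ through a vertex of $F$ was already explored) is the right way to make the statement true in the operational sense the algorithm needs; note this amounts to admitting that the theorem as literally stated is about the largest \emph{unexplored} clique containing $R$, not the largest clique per se. The induction you defer is genuinely the hard part, but it is absent from the paper's proof as well, so your version is strictly more rigorous than the original, not less. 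One small caution: prefer ``every maximal extension of $R$ through $F$ has been visited, so $|R_{best}|$ already dominates it'' over ``recorded in $R_{best}$,'' since only the incumbent best is stored.
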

\begin{proof}
	The proof is by contradiction. Consider that the largest clique  containing $R\cup \{v_k\}$ has been already obtained. Then, $Q$ can be found by expanding vertices in $S \setminus \{v_k\}$. If  $Q = (R \cup P) \cap (S  \setminus \{v_k\})$ with $P \subseteq S  \cap \Gamma(v_k)$, then $Q \cup \{ v_k \}$ is a larger clique that contains $R$. However, this contradicts $Q$ being the largest clique.
\end{proof}
From Theorem~\ref{theo:additional}, we can take  $C := S\setminus\Gamma(v_k)$, for any $v_k$ in $S \cup F$. We choose $v_k$ such that we can skip expanding as many vertices as possible
	\begin{align}\label{eq:skip}
	\begin{aligned}
	&\underset{v_k \,\in\, S\cup F }{\text{maximise}}
	& & \left| S \cap \Gamma(v_k) \right|.
	\end{aligned}
	\end{align}
Solving~\eqref{eq:skip} is equivalent to find the vertex with the highest degree in the subgraph with vertices in $S$,  which can be solved in quadratic time. Since~\eqref{eq:skip} is solved only once in each node of the search tree, this extra step  introduces minor computation cost. Experiments in Sec.~\ref{sec:results} show that this extra step produces significant speed-ups when finding the optimal solution for problem~\eqref{eq:pw}.

Note that Algorithm~\ref{alg:expand2} also uses pruning with graph colouring. The colouring of a node $S$ is obtained using the greedy heuristic used in MCQ. However, removing vertices from $S$ may invalidate that colours are consecutive numbers, since not necessarily the last vertex in $S$ (associated with the largest colour) is removed if expanding vertices in $C$ only.  As colours are sorted, colour numbers can be reassigned in linear time (Line~\ref{alg:mcx:updatecol} in Algorithm~\ref{alg:expand2}).

\begin{table*}[t!]
	\centering
	\renewcommand{\arraystretch}{1.15}
	\resizebox{.99\textwidth}{!}{
		\begin{tabular}{|l|r|r|rr|rr|rrr|r|r|r|r|r|}
			\cline{1-15}
			\multicolumn{1}{|c|}{\multirow{2}{*}{Object}} &
			\multicolumn{1}{c|} {\multirow{2}{*}{$N$}} &
			\multicolumn{1}{c|} {\multirow{2}{*}{Outlier}} &
			\multicolumn{2}{c|} {\multirow{2}{*}{Consistency graph}} &
			\multicolumn{2}{c|} {\multirow{2}{*}{Inconsistency graph}} &
			\multicolumn{3}{c|} {\multirow{2}{*}{Solution}} &
			\multicolumn{1}{c|} {\multirow{2}{*}{\mcx{}}} &
			\multicolumn{1}{c|} {\multirow{2}{*}{MCQ}} &
			\multicolumn{1}{c|} {\multirow{2}{*}{MIP-MC}} &
			\multicolumn{1}{c|} {\multirow{2}{*}{MIP-VC}} &
			\multicolumn{1}{c|} {\multirow{2}{*}{RANSAC}}
			\\[.6em]
			& 
			& \multicolumn{1}{c|}{ratio} 
			& \multicolumn{1}{c}{$|V|$}     
			& \multicolumn{1}{c|}{$|E|$}
			& \multicolumn{1}{c}{$|V|$}     
			& \multicolumn{1}{c|}{$|E|$}
			& \multicolumn{1}{c}{$|I^*|$}   
			& \multicolumn{1}{c}{angErr (\textdegree)}
			& \multicolumn{1}{c|}{trErr}
			& \multicolumn{1}{c|}{time (s)}     
			& \multicolumn{1}{c|}{time (s)}     
			& \multicolumn{1}{c|}{time (s)}     
			& \multicolumn{1}{c|}{time (s)}     
			& \multicolumn{1}{c|}{time (s)}     
			\\
			\hline
			\multirow{3}{*}{{\parbox[c][4em][c]{1.8cm} {\emph{armadillo}\\
						$|\cX|=788$\\
						$|\cY|=711$ }}}
			& 1000   & 0.98 & 1000   & 167200 & 1000   & 831800 & 37     & 0.72   & 0.06   & 0.020  & 0.010  & 1387.404 & 1291.205 & 25.25\\
			& 3000   & 0.98 &3000   & 1397630 & 3000   & 7599370 & 92     & 0.61   & 0.05   & 2.510  & 5.390  & --     & --     &39.93\\
			& 5000   & 0.98 &5000   & 3485008 & 5000   & 21509992 & 139    & 0.98   & 0.15   & 61.080 & 3377.430 & --     & --     &70.16\\
			\hline
			\multirow{3}{*}{{\parbox[c][4em][c]{1.8cm} {\emph{buddha}\\
			$|\cX|=206$\\
			$|\cY|=193$ }}}
			& 1000   & 0.96 &1000   & 139516 & 1000   & 859484 & 55     & 0.64   & 0.34   & 0.170  & 0.020  & 75.170 & 85.624 & 5.67\\
			& 3000   & 0.98 &3000   & 700794 & 3000   & 8296206 & 68     & 1.60   & 0.18   & 5.420  & 2.730  & --     & --     &105.24\\
			& 5000   & 0.99 &5000   & 1700434 & 5000   & 23294566 & 74     & 1.06   & 0.25   & 5.640  & 76.100 & --     & --     &602.04\\
			\hline
			\multirow{3}{*}{{\parbox[c][4em][c]{1.8cm} {\emph{bunny}\\
						$|\cX|=668$\\
						$|\cY|=615$ }}}
			& 1000   & 0.96 &1000   & 101956 & 1000   & 897044 & 27     & 2.03   & 0.42   & 0.020  & 0.010  & --     & --     &5.09\\
			& 3000   & 0.96 &3000   & 918160 & 3000   & 8078840 & 102    & 0.32   & 0.11   & 0.410  & 0.280  & 1224.189 & 1642.969 &7.32\\
			& 5000   & 0.96 &5000   & 2490092 & 5000   & 22504908 & 171    & 0.47   & 0.09   & 7.250  & --     & 3520.973 & --     &16.39\\
			\hline
			\multirow{3}{*}{{\parbox[c][4em][c]{1.8cm} {\emph{chef}\\
						$|\cX|=183$\\
						$|\cY|=185$ }}}
			& 1000   & 0.94 &1000   & 117182 & 1000   & 881818 & 80     & 1.01   & 0.21   & 0.310  & 0.040  & 16.469 & 22.378 & 1.32\\
			& 3000   & 0.97 &3000   & 774036 & 3000   & 8222964 & 97     & 0.95   & 0.16   & 1.980  & 1.410  & 1578.146 & 2293.338 &30.39\\
			& 5000   & 0.98 &5000   & 1961382 & 5000   & 23033618 & 100    & 0.61   & 0.26   & 6.870  & 37.470 & --     & --     &200.01\\
			\hline
			\multirow{3}{*}{{\parbox[c][4em][c]{1.8cm} {\emph{chicken}\\
						$|\cX|=601$\\
						$|\cY|=616$ }}}
			& 1000   & 0.97 &1000   & 142856 & 1000   & 856144 & 26     & 15.88  & 2.02   & 0.040  & 0.020  & --     & --     & 14.53\\
			& 3000   & 0.98 &3000   & 1265814 & 3000   & 7731186 & 55     & 1.63   & 0.28   & 1.480  & 0.830  & --     & --     &58.08\\
			& 5000   & 0.98 &5000   & 3597810 & 5000   & 21397190 & 81     & 0.93   & 0.23   & 7.620  & 21.420 & --     & --     &98.33\\
			\hline
			\multirow{3}{*}{{\parbox[c][4em][c]{1.8cm} {\emph{dragon}\\
						$|\cX|=289$\\
						$|\cY|=270$ }}}
			& 1000   & 0.91 &1000   & 141516 & 1000   & 857484 & 106    & 0.23   & 0.20   & 0.090  & 0.020  & 19.240 & 21.579 & 0.52\\
			& 3000   & 0.97 &3000   & 877756 & 3000   & 8119244 & 126    & 0.41   & 0.20   & 1.030  & 0.930  & 772.552 & 1294.577 &12.74\\
			& 5000   & 0.98 &5000   & 2211540 & 5000   & 22783460 & 136    & 0.31   & 0.19   & 5.530  & 18.130 & --     & --     &76.86\\
			\hline
			\multirow{3}{*}{{\parbox[c][4em][c]{1.8cm} {\emph{parasauro}\\
						$|\cX|=261$\\
						$|\cY|=216$ }}}
			& 1000   & 0.93 &1000   & 153806 & 1000   & 845194 & 81     & 0.14   & 0.10   & 0.040  & 0.020  & 19.053 & 24.703 &  1.26\\
			& 3000   & 0.97 &3000   & 973874 & 3000   & 8023126 & 118    & 0.40   & 0.10   & 2.830  & 42.160 & 2289.741 & 2681.053 &21.83\\
			& 5000   &  0.98 &5000   & 2214264 & 5000   & 22780736 & 127    & 0.44   & 0.22   & 36.600 & --     & --      & --      &84.86\\
			\hline
			\multirow{3}{*}{{\parbox[c][4em][c]{1.8cm} {\emph{t-rex}\\
						$|\cX|=222$\\
						$|\cY|=217$ }}}
			& 1000   & 0.93 &1000   & 116406 & 1000   & 882594 & 86     & 0.43   & 0.15   & 0.040  & 0.010  & 13.601 & 15.705 & 0.88\\
			& 3000   & 0.97 &3000   & 818628 & 3000   & 8178372 & 118    & 0.13   & 0.21   & 1.200  & 10.570 & 865.289 & 1339.081 &18.15\\
			& 5000   & 0.98 &5000   & 2022928 & 5000   & 22972072 & 128    & 0.27   & 0.22   & 7.970  & 1287.740 & --     & --     & 81.86\\
			\hline
		\end{tabular}
	} 
	\caption{Results for matching with pairwise constraints~\eqref{eq:pw} and RANSAC. `--' implies forced timeout after a 1-hour limit.}
	\label{tab:pcr}
\end{table*}

\begin{figure*}
	\def\figh{1.42cm}
	\begin{tabularx}{\textwidth}{c|C{.3}|C{.3}|C{.3}}
		& {\small Input correspondence set}  & {\small Pairwise consistent correspondences} & {\small Alignment}\\
		\hline
		&&&\\[-1em]
		\rotatebox[origin=l]{90}{\small\emph{armadillo}}&
		\includegraphics[height=\figh]{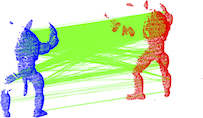}&
		\includegraphics[height=\figh]{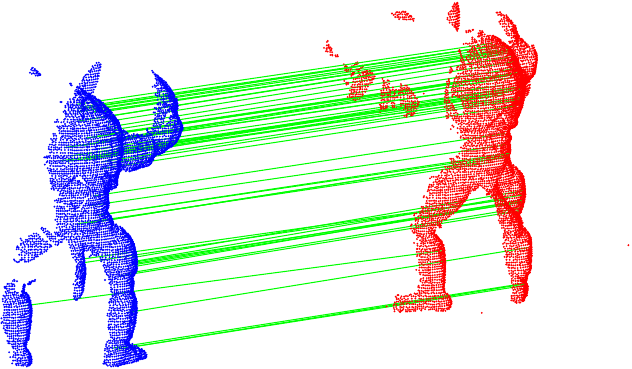}&
		\includegraphics[height=\figh]{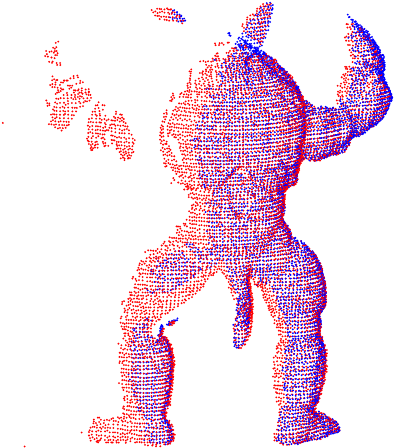}
		\\[-.2em]
		\rotatebox[origin=l]{90}{\small\emph{buddha}}&
		\includegraphics[height=\figh]{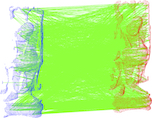}&
		\includegraphics[height=\figh]{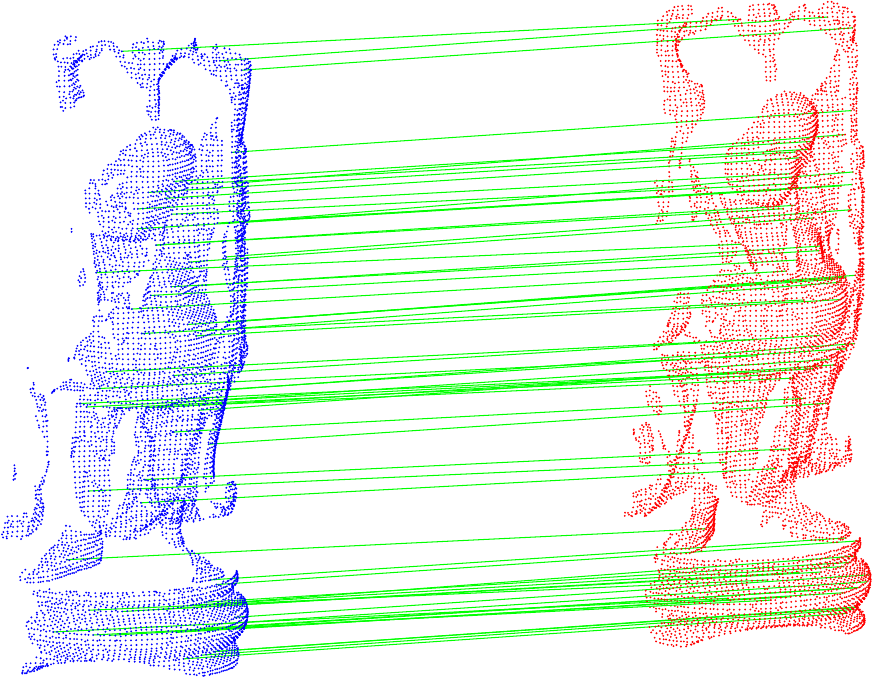}&
		\includegraphics[height=\figh]{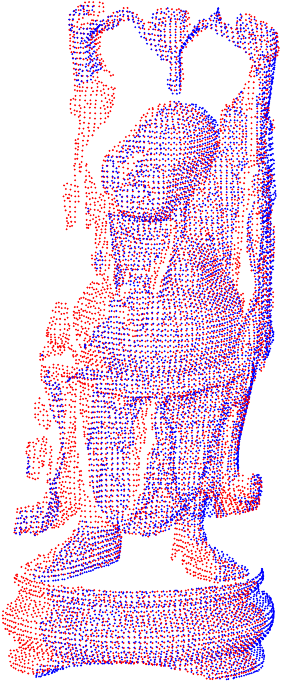}
		\\[-.2em]
		\rotatebox[origin=l]{90}{\small\emph{bunny}}&
		\includegraphics[height=\figh]{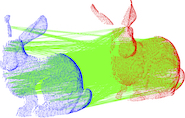} &
		\includegraphics[height=\figh]{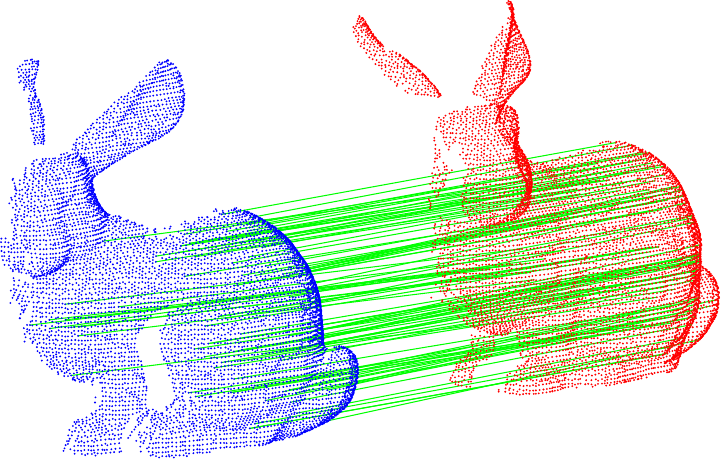} &
		\includegraphics[height=\figh]{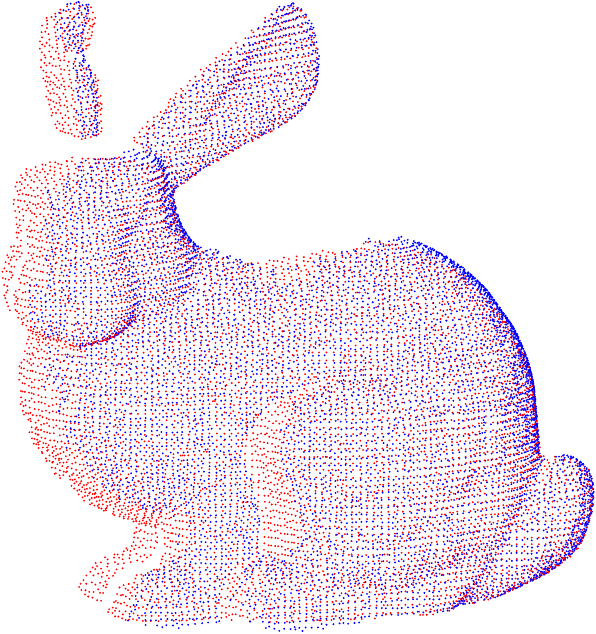}
		\\[-.2em]
		\rotatebox[origin=l]{90}{\small\emph{chef}}&
		\includegraphics[height=\figh]{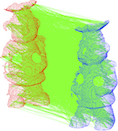}&
		\includegraphics[height=\figh]{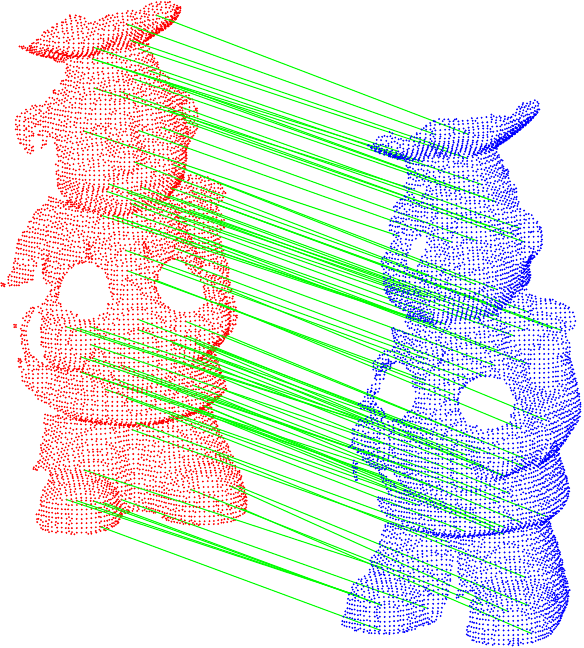}&
		\includegraphics[height=\figh]{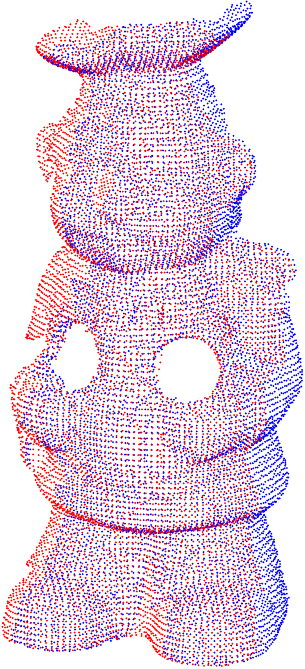}
		\\[-.2em]
		\rotatebox[origin=l]{90}{\small\emph{chicken}}&
		\includegraphics[height=\figh]{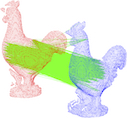}&
		\includegraphics[height=\figh]{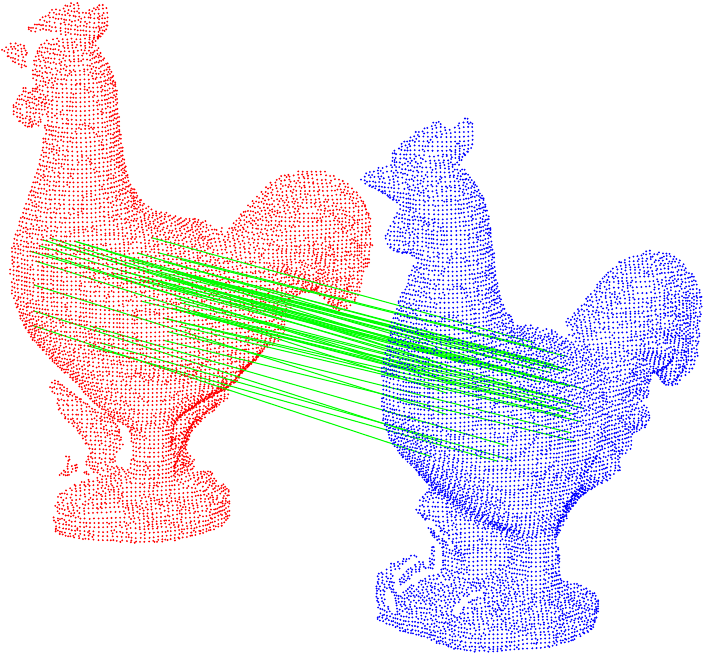}&
		\includegraphics[height=\figh]{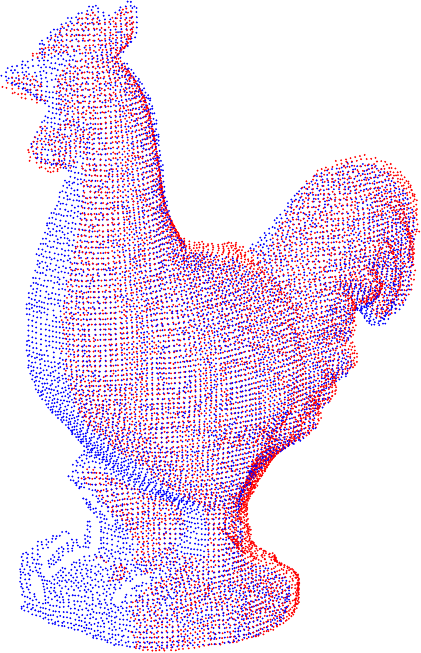}
		\\[-.2em]
		\rotatebox[origin=l]{90}{\small\emph{dragon}}&
		\includegraphics[height=\figh]{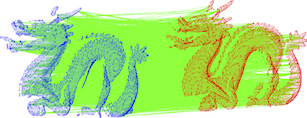}&
		\includegraphics[height=\figh]{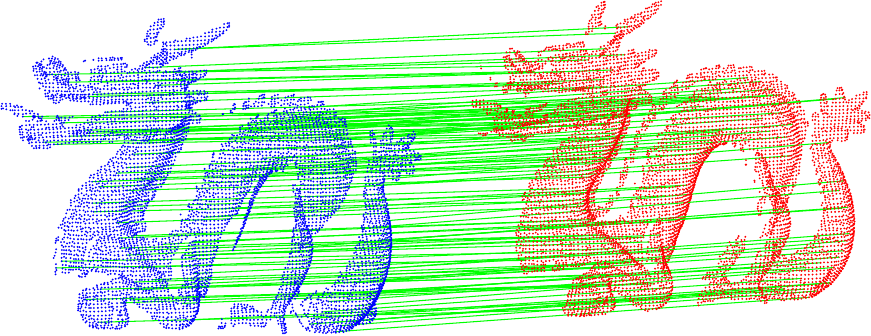}&
		\includegraphics[height=\figh]{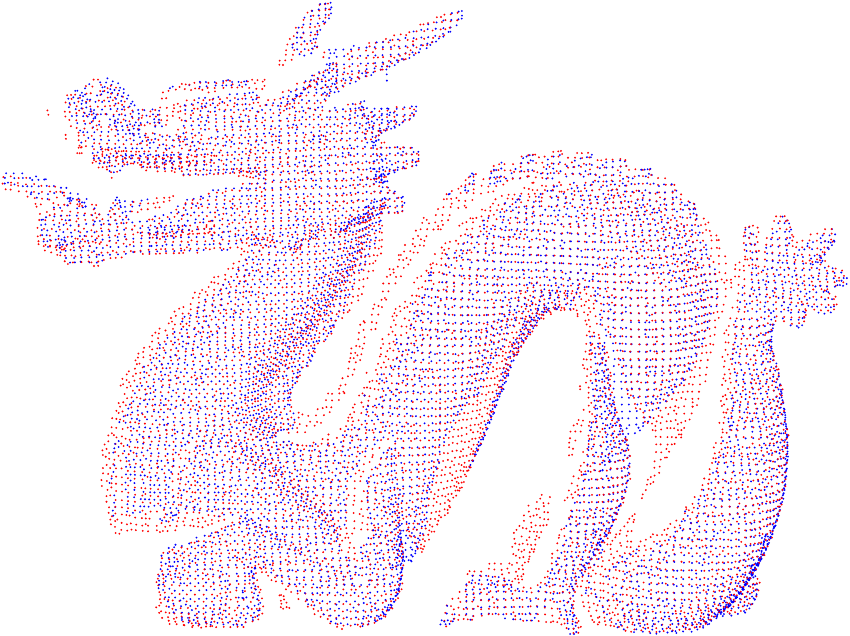}
		\\[-.2em]
		\rotatebox[origin=l]{90}{\small\emph{parasauro}}&
		\includegraphics[height=\figh]{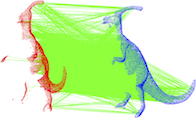}&
		\includegraphics[height=\figh]{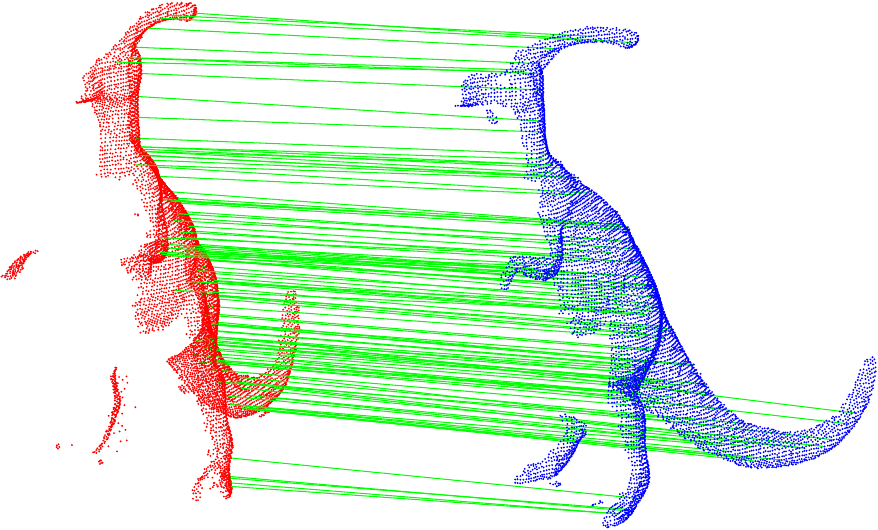}&
		\includegraphics[height=\figh]{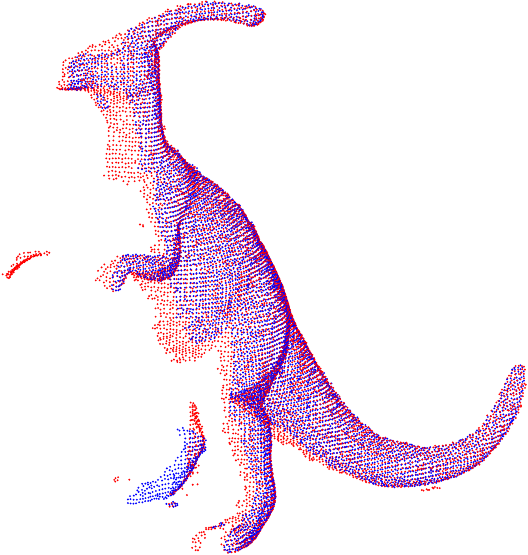}
		\\[-.2em]
		\rotatebox[origin=l]{90}{\small\emph{t-rex}}&
		\includegraphics[height=\figh]{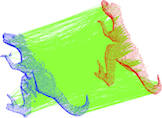}&
		\includegraphics[height=\figh]{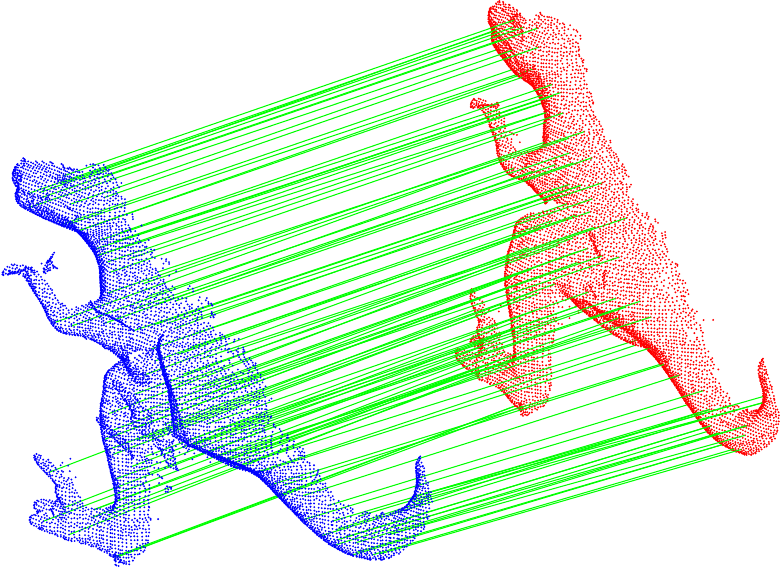}&
		\includegraphics[height=\figh]{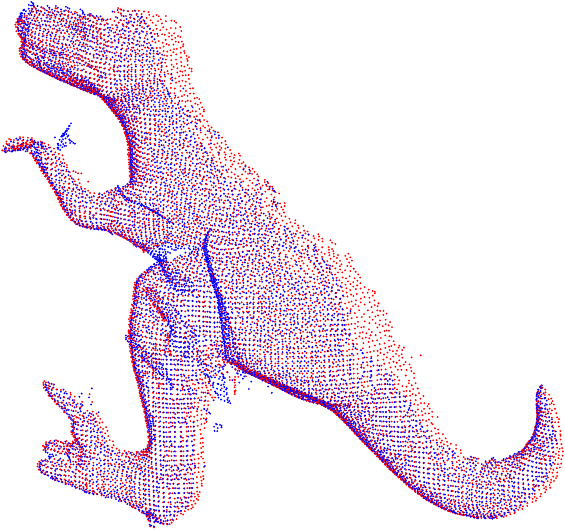}
		\\[-.2em]
	\end{tabularx}
	\caption{Qualitative results. Column 1: Input correspondence set $\cC$ with $N=3000$. Column 2: Largest subset of pairwise consistent correspondences $\cI^\ast$. Column 3: The alignment given by the rigid transformation estimated from $\cI^\ast$ using SVD~\cite{horn87}.}
	\label{fig:pcr}
\end{figure*}

\section{Results}\label{sec:results}

To investigate the efficacy of the proposed method, we compared the following algorithms for matching with pairwise constraints~\eqref{eq:pw}:
\begin{itemize}[itemsep=0.0pt,topsep=0.5pt,parsep=0.5pt]
	\item \textbf{MCQ}: BnB method (Algorithm~\ref{alg:expand}) presented in~\cite{tomita03} \hl{(using the C++ implementation of}~\cite{eppstein11}\footnote{\url{https://github.com/darrenstrash/quick-cliques}}).
	\item \textbf{PMC}: Our BnB method (Algorithm~\ref{alg:expand2}) with the same initial vertex ordering as MCQ.
	\item \textbf{MIP-MC}: MIP formulation of MC solved with Gurobi Optimiser.
	\item \textbf{MIP-MVC}: MIP formulation of MVC solved with Gurobi Optimiser.
\end{itemize}
PMC was implemented in C++, and all experiments were conducted on a standard PC with a 2.50 GHz CPU and 8 GB of RAM. Note that the currently state-of-the-art algorithm for matching with pairwise constraints in computer vision is~\cite{enqvist09}. Our experimentations suggest however that Enqvist et al.~(which is an MVC solver\footnote{Based on our own implementation. The original authors' implementation was unavailable.}) is outperformed generally by MIP-MVC, e.g., the former was unable to terminate within the imposed time limit on all data instances. We thus do not include Enqvist et al.~in our benchmark. 


We tested all methods on scans of objects from two different datasets, namely, the Stanford 3D Scanning Repository~\cite{curless96} (\emph{armadillo}, \emph{buddha}, \emph{bunny}, and \emph{dragon}), and Mian's dataset~\cite{mian06} (\emph{chef}, \emph{chicken}, \emph{parasauro} and \emph{t-rex}). The scans of the objects are illustrated in Fig.~\ref{fig:pcr}.

Two partially overlapping scans $\cV_1$ and $\cV_2$ were selected for each object. $\cV_1$ and $\cV_2$ were centred and scaled such that both point sets were contained in the cube $[-50,50]^3$. The set of point correspondences $\cC$ was obtained from the sets of keypoints $\cX$ and $\cY$  using the state-of-the-art ISS3D~\cite{zhong09} detector and the PFH~\cite{rusu08} descriptor, as implemented on Point Cloud Library\footnote{\url{http://pointclouds.org/}}. The sizes of $\cX$ and $\cY$ are listed on the Column 1 in Table~\ref{tab:pcr}.



The $N$ best keypoint correspondences (sorted based on the $\ell_2$-norm of the PFH descriptors), where $N \in \{1000,3000,5000\}$, were retained to create instances of the pairwise maximisation problem~\eqref{eq:pw} for point cloud registration. Based on the graph construction methods outlined in Sec.~\ref{sec:graph}, the size of the consistency and the inconsistency graphs are listed in Columns 4--7 in Table~\ref{tab:pcr}. The inlier threshold $\epsilon$ for \eqref{eq:pw} was taken as twice the average nearest neighbour distance in $\cV_1$ and $\cV_2$.

To give an indication of the ratio of correct correspondences generated for each $\cV_1$ and $\cV_2$ pair, we registered the point clouds using the ground truth rigid transformation and determined the subset of $\cC$ (the outliers) that cannot be aligned up to $\epsilon/2$ (using $\epsilon/2$ ensures the equivalence of the thresholds for pairwise consistent alignment~\eqref{eq:pw} and ``unary" alignment; see~\cite{enqvist09} for details).
The outlier ratio for each input instance is listed in Column~3 in Table~\ref{tab:pcr}. Notice that the outlier rates are extremely high (up to $99\%$).


We recorded the following measures for each method:
\begin{itemize}[itemsep=0.0pt,topsep=0.5pt,parsep=0.5pt]
		\item $|I^*|$: The optimal value of~\eqref{eq:pw}.
		\item time (s): runtime of solving~\eqref{eq:pw}, i.e., after generating the input graph. A timeout of $1$ hour ($3600$ seconds) was imposed on all methods.
\end{itemize}
If a method could not terminate successfully within the time limit, the result was marked with a `-' in the table. Note that since all the methods return the global solution to~\eqref{eq:pw}, they have the same solution quality and differ only in runtime. To assess the goodness of the registration on each pair of point clouds, a rigid transformation was estimated using SVD~\cite{horn87} on $\cI^\ast$. The quality of the rigid transformation parameters was measured based on the following errors:
\begin{itemize}[itemsep=0pt,topsep=0.5pt,parsep=0.5pt]
		\item angErr (\textdegree): Angular error of estimated rotation w.r.t.~the true rotation.
		\item trErr: Translation error of estimated translation w.r.t.~the true translation.
\end{itemize}
Columns 8--14 in Table~\ref{tab:pcr} show the recorded measures. 

Although in general MIP-MC reported lower runtimes than MIP-MVC, MIP-MC still is impractical, i.e., it was unable to terminate within 1 hour on half of the generated input instances. On the instances where it was able to finish within the time limit, MIP-MC took several orders of magnitude more time than PMC to find the optimal solution.

In general, \mcx{} found the optimal solution in less than 10 seconds. Although MCQ performed better than \mcx{} for $N=1000$, \mcx{} converged considerably faster than MCQ for $N=5000$ - e.g., the runtime of MCQ was more than 1000 seconds in \emph{armadillo} and \emph{t-rex}. Note that MCQ did not converge within the time limit for all instances. Since MCQ and PMC use the same greedy heuristic for colouring, the observed runtimes suggest that the proposed additional pruning step in Sec.~\ref{sec:bnbextra} played an important role for solving~\eqref{eq:pw} on large graphs. Observe that for $N=5000$, the number of vertices of the consistency graph  is one order of magnitude larger than for $N=1000$. Note that we are more likely to obtain good registration from larger correspondence sets. Observe that for $N=1000$, the optimal solution of~\eqref{eq:pw} did not produce a good alignment in all cases; in particular, the angular error was considerable for \emph{bunny} and \emph{chicken}. Thus, the performance gain of PMC over MCQ on large graphs is of high practical significance. 

Fig.~\ref{fig:pcr} depicts qualitative results for all the tested objects with $N=3000$ correspondences. As alluded above, the rigid transformations were estimated using SVD from $\cI^\ast$.

\vspace{-1em}
\paragraph{Comparison against RANSAC} As a baseline, on each input instance $\cC$, we executed RANSAC~\cite{fischler81} to find the largest subset of correspondences that are ``unary" consistent, i.e., the largest consensus set. We used a confidence level $0.99$ for the stopping criterion. Column~15 in Table~\ref{tab:pcr} shows the median runtime of RANSAC over 100 runs. Observe that the runtime of RANSAC is quite significant, i.e., up to one order of magnitude greater than the runtime of PMC. This is because the runtime of RANSAC grows exponentially with the outlier ratio, and all the instances in Table~\ref{tab:pcr} have more than $90\%$ outliers.


\section{Conclusions}

We have shown that matching with pairwise constraints can be performed in reasonable time when posing the problem as maximum clique. We have also proposed a maximum clique algorithm that combines graph colouring with a proposed extra pruning step to very efficiently solve maximum clique. The obtained results demonstrate that, using the proposed algorithm, matching with pairwise constraints is a very practical approach for point cloud registration and an excellent alternative to standard robust estimation.


\clearpage

{\small
	\bibliographystyle{ieee}
	\bibliography{ms}
}

\end{document}